%%%%%%%% ICML 2023 EXAMPLE LATEX SUBMISSION FILE %%%%%%%%%%%%%%%%%

\documentclass{article}

% Recommended, but optional, packages for figures and better typesetting:
\usepackage{microtype}
\usepackage{graphicx}
\usepackage{subfigure}
\usepackage{booktabs} % for professional tables
\usepackage[ruled,vlined]{algorithm2e} % for pseudo-code
\usepackage{algpseudocode}
\usepackage{xcolor}
\usepackage{amsmath}
\usepackage{listings}
\usepackage{mdframed}
\usepackage{float}
% Define a new command for Wenxuan's comments

\surroundwithmdframed[
  hidealllines=true,
  innerleftmargin=0pt,
  innertopmargin=0pt,
  innerbottommargin=0pt]{lstlisting}
\lstdefinelanguage{Python}{
keywords={batch_shape, n, x, nblocks, r_blk, blk_sz, blkdiag1, blkdiag2, batch_dim, torch, empty, bmm, transpose, reshape, permute},
keywordstyle=\color{black},
comment=[l]{\#},
commentstyle=\color{commentcolor},
stringstyle=\color{codecolor},
showstringspaces=false,
basicstyle=\ttfamily\footnotesize,
morestring=[b],
}
% For pytorch pseudo-code
\definecolor{commentcolor}{RGB}{110,154,155}   % define comment color
\definecolor{codecolor}{RGB}{0,0,0}            % define code color

% hyperref makes hyperlinks in the resulting PDF.
% If your build breaks (sometimes temporarily if a hyperlink spans a page)
% please comment out the following usepackage line and replace
% \usepackage{icml2024} with \usepackage[nohyperref]{icml2024} above.
\usepackage{hyperref}

% Attempt to make hyperref and algorithmic work together better:

% Use the following line for the initial blind version submitted for review:
\usepackage[accepted]{icml2024}

% If accepted, instead use the following line for the camera-ready submission:
% \usepackage[accepted]{icml2024}

% For theorems and such
\usepackage{amsmath}
\usepackage{amssymb}
\usepackage{mathtools}
\usepackage{amsthm}
\usepackage{arydshln} % For dashed lines in tables

% if you use cleveref..
\usepackage[capitalize,noabbrev]{cleveref}

%%%%%%%%%%%%%%%%%%%%%%%%%%%%%%%%
% THEOREMS
%%%%%%%%%%%%%%%%%%%%%%%%%%%%%%%%
\theoremstyle{plain}
\newtheorem{theorem}{Theorem}[section]

\newtheorem{lemma}[theorem]{Lemma}
\newtheorem{corollary}[theorem]{Corollary}
\theoremstyle{definition}

\theoremstyle{remark}

% Todonotes is useful during development; simply uncomment the next line
%    and comment out the line below the next line to turn off comments
%\usepackage[disable,textsize=tiny]{todonotes}
\usepackage[textsize=tiny]{todonotes}

% The \icmltitle you define below is probably too long as a header.
% Therefore, a short form for the running title is supplied here:
\icmltitlerunning{MoRe Fine-Tuning with 10x Fewer Parameters}

\begin{document}
\twocolumn[
\icmltitle{MoRe Fine-Tuning with 10x Fewer Parameters}

% It is OKAY to include author information, even for blind
% submissions: the style file will automatically remove it for you
% unless you've provided the [accepted] option to the icml2024
% package.

% List of affiliations: The first argument should be a (short)
% identifier you will use later to specify author affiliations
% Academic affiliations should list Department, University, City, Region, Country
% Industry affiliations should list Company, City, Region, Country

% You can specify symbols, otherwise they are numbered in order.
% Ideally, you should not use this facility. Affiliations will be numbered
% in order of appearance and this is the preferred way.

\begin{icmlauthorlist}
\icmlauthor{Wenxuan Tan}{sch}
\icmlauthor{Nicholas Roberts}{sch}
\icmlauthor{Tzu-Heng Huang}{sch}
\icmlauthor{Jitian Zhao}{sch}
\icmlauthor{John Cooper}{sch}
\icmlauthor{Samuel Guo}{sch}
\icmlauthor{Chengyu Duan}{sch}
\icmlauthor{Frederic Sala}{sch}
\end{icmlauthorlist}

% \icmlaffiliation{yyy}{Department of XXX, University of YYY, Location, Country}
% \icmlaffiliation{comp}{Company Name, Location, Country}
\icmlaffiliation{sch}{Department of Computer Sciences, University of Wisconsin—Madison, Madison, WI, USA}

\icmlcorrespondingauthor{Wenxuan Tan}{wtan45@wisc.edu}

% You may provide any keywords that you
% find helpful for describing your paper; these are used to populate
% the "keywords" metadata in the PDF but will not be shown in the document
\icmlkeywords{Machine Learning, ICML}

\vskip 0.3in
]

% this must go after the closing bracket ] following \twocolumn[ ...

% This command actually creates the footnote in the first column
% listing the affiliations and the copyright notice.
% The command takes one argument, which is text to display at the start of the footnote.
% The \icmlEqualContribution command is standard text for equal contribution.
% Remove it (just {}) if you do not need this facility.

\printAffiliationsAndNotice{}  % leave blank if no need to mention equal contribution
% \printAffiliationsAndNotice{\icmlEqualContribution} % otherwise use the standard text.
\begin{abstract}
  Parameter-efficient fine-tuning (PEFT) techniques have unlocked the potential to cheaply and easily specialize large pretrained models. 
  However, the most prominent approaches, like low-rank adapters (LoRA) depend on heuristics or rules-of-thumb for their architectural choices---potentially limiting their performance for new models and architectures. 
  This limitation suggests that techniques from neural architecture search could be used to obtain optimal adapter architectures, but these are often expensive and difficult to implement. 
  We address this challenge with \underline{Mo}narch \underline{Re}ctangular Fine-tuning (MoRe), a simple framework to search over adapter architectures that relies on the Monarch matrix class.  
  Theoretically, we show that MoRe is more expressive than LoRA. 
  Empirically, our approach is more parameter-efficient and performant than state-of-the-art PEFTs on a range of tasks and models, with as few as 5\% of LoRA's parameters. 
\end{abstract}

\section{Introduction}
Large pretrained `foundation' models \cite{Bommasani2021OnTO} were originally conceived as a convenient base for rapidly building applications. 
The size and complexity of these models; however, paradoxically often made specialization more complex and challenging than traditional machine learning. 
Recently, adapters, like the popular LoRA~\cite{hu2021lora}, have dramatically decreased the cost of specialization.
This has unlocked the potential of foundation models for efficient use in everyday settings

Despite their popularity, parameter-efficient adapter techniques make particular architectural assumptions, such as the eponymous low rank in LoRA~\cite{hu2021lora, hu2023llm, zhang2023adaptive, chavan2023oneforall, liu2024parameterefficient}.
These assumptions are a good fit for certain models, tasks, and datasets---but may result in poor performance on others.
There has been a resulting arms race of parameter-efficient fine-tuning (PEFTs) techniques, each with their own benefits and drawbacks.
This suggests that the right adapter architecture should be \emph{learnable}.

Learning architectures is the traditional domain of neural architecture search (NAS). 
Unfortunately, most NAS techniques are heavyweight~\cite{pmlr-v80-pham18a, liu2018darts, pmlr-v115-li20c, li2021geometryaware}, creating a tension: 
NAS may learn better adapter architectures for a particular task but costs substantially more compute---sacrificing much of the benefits of adapters in the first place. 

We show how to resolve this tension by relying on the Monarch matrix class~\cite{dao2022monarch}. 
This class presents a simple parametrization that can express a vast range of \emph{structured matrices}, enabling conveniently learning a wide variety of parameter-efficient architectures. 
In other words, building adapters from Monarch matrices simultaneously produces two benefits---\textbf{flexibly searching over architectures} and \textbf{efficient training for adapters}. 

Based on this idea, we introduce a simple PEFT framework called \underline{Mo}narch \underline{Re}ctangular Fine-tuning (MoRe).
We study its expressiveness properties theoretically and validate it empirically.
When fixing block configuration after extensive architectural ablations, the most performant adapter we produced via MoRe is $10\times-20 \times$ more parameter-efficient than LoRA and has the \textbf{fewest tunable hyperparameters} among all PEFTs. We have open-sourced all code to reproduce our experiments on Github.

\section{Related Work}
PEFT methods trade off mechanisms for parameter-efficiency and performance. 
These mechanisms are designed heuristically, and may not be the best choice for all settings. 
Popular methods such as LoRA \cite{hu2021lora} may not strike the best tradeoff between efficiency and performance. 
Other methods often sacrifice increased complexity and a reliance on search for improved performance, limiting scalability. 
Methods such as GLoRA and AdaLora~\cite{chavan2023oneforall, zhang2023adaptive} require expensive search for their rank and block configurations. 
Our goal is to strike a better tradeoff compared to current PEFT techniques, all while avoiding expensive search procedures. 
We describe the relation between MoRe and existing techniques below.

% @Nick
% %
% Fine-tuning methods demand a trade-off between the sparsity (efficiency) and expressivity (performance) of the approach. 
% %
% Previous methods such as LoRA are successful but may not strike the best balance of sparsity/expressivity. 
%
% They are also highly dependent on the specific parameters of the method (such as rank), which are chosen through heuristics.
% %

\textbf{Orthogonal Fine-tuning.} 
Butterfly Orthogonal Fine-tuning (BOFT) \cite{liu2024parameterefficient} uses a compute-heavy neighbor of Monarch, the butterfly matrices, to parameterize Cayley-transformed orthogonal blocks that are multiplied with the original weight matrices. 
It has two more tunable parameters, the block number and size.   
In contrast, MoRe does not require tuning the number of blocks or rank and is more performant and parameter-efficient than BOFT.

\textbf{Representation Editing.}  
ReFT \cite{wu2024reft} is a prompt editing method operating on low-rank subspaces. 
It balances expressivity and parameter efficiency by only intervening on selected layers and tokens, often surpassing PEFTs that adapt model weights. 
However, it induces inference overhead and an even larger search space than LoRA (token positions and layers to intervene). 
It is also somewhat less well-understood compared to existing PEFTs from a theoretical point of view. 
%It also has fewer theoretical guarantees than the Kaleidoscope, e.g. authors are unsure whether projections into the subspace should be orthogonal and why it’s performant. 
%

\section{MoRe Framework}
%\subsection{Monarch}

Monarch matrices~\cite{dao2022monarch} are a rich collection of block-sparse structured matrices that subsume butterfly matrices and belong to the Kaleidoscope matrices~\cite{dao2020kaleidoscope}, a class of matrices that can represent any structured matrix and a variety of transforms such as the Fourier transform, cosine transforms, and Hadamard transform. Unlike structure-agnostic matrix families, such as those of low rank, Monarch matrices can have arbitrary rank, and their products are not closed, allowing \emph{for a richer matrix class} as more Monarch matrices are multiplied.

% \begin{figure}
%     \centering
%     \includegraphics[width=8cm]{}
%     \caption{A butterfly matrix with 4 butterfly factors, corresponding to $m=4$ in BOFT}
%     \label{fig:butterfly}
% \end{figure}

\begin{figure}
    \centering
    \includegraphics[width=8cm]{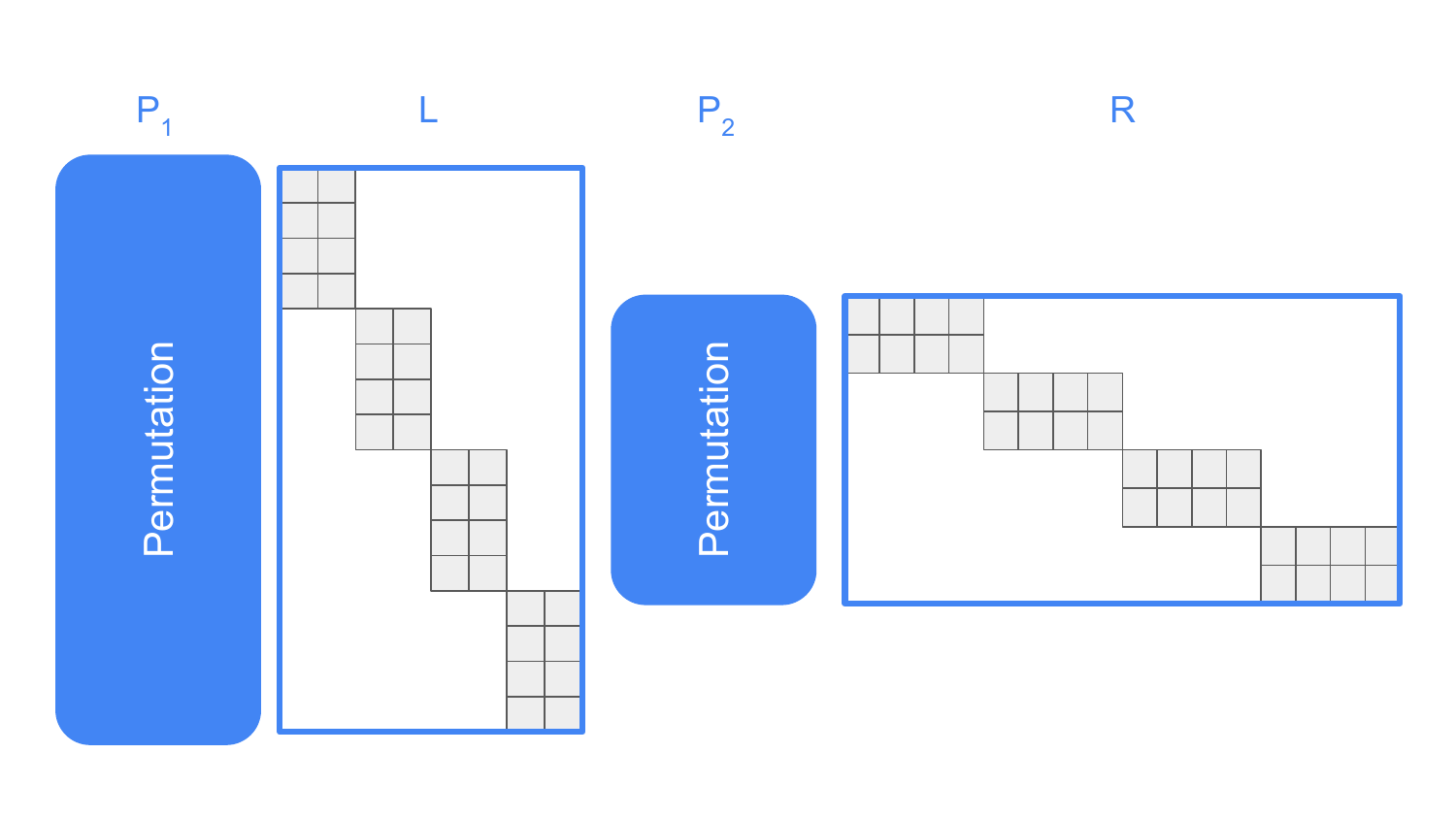}
    \caption{
    The structure of low-rank Monarch matrices contains two permutations $P_1$ and $P_2$ along with two block-diagonal components $L$ and $R$ which are learned while $P_1$ and $P_2$ are both fixed. 
    In the above, the number of blocks $N=4$, with input dimension $n=16$, and the block `rank' is $r_{blk}=2$ and size is $n/N=4$. \textbf{The pseudo-code can be found in appendix \ref{algo:monarch}.}
    }
    \label{fig:monarch}
\end{figure}

Let $n$ be the dimensions of the Monarch matrix $M$, i.e. $M \in \mathbb{R}^{n\times n}$. Define $N$ as the number of blocks in component matrices $L$ and $R$ and $r_{blk}$ as the rank of each sub-block. In Monarch matrix's standard square form, $r_{blk} = n/N$. Monarch matrices have the following structure:
\begin{equation}
\label{eq:monarch}
    M = P_1 L P_2 R, 
\end{equation}
where $P_1$ and $P_2$ are permutation matrices and $L$ and $R$ are block diagonal (see Figure \ref{fig:monarch}).

Original work in Monarch matrices focused on the case where $L$ and $R$ are block-wise square, but the family of Monarch matrices is more general and includes \textit{low-rank} Monarch matrices. This extension allows for $L$ and $R$ to be rectangular, with similarly shaped block diagonal components. This allows the overall rank of the Monarch matrix to be constrained by forcing $L$ and $R$ to have similar shapes to LoRA components---but with fewer parameters, as Monarch only contains non-zero entries within the diagonal blocks.

\begin{figure}
    \centering
    \includegraphics[scale=0.7]{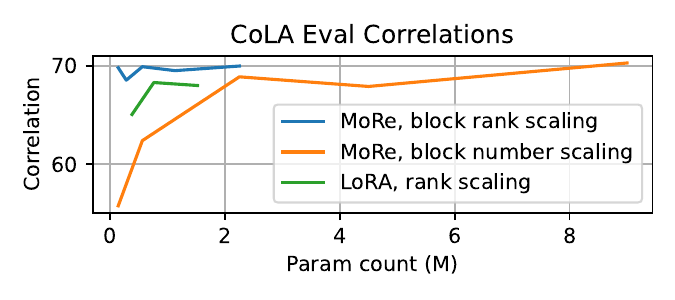}
    \caption{
    Matthew's Correlation on CoLA when trade parameter counts for performance on two axes: the block dimension and the number of blocks, both with square blocks. 
    The block dimensions used are $[4, 8, 16, 32, 64]$ and the $N$ are $[1024, 256, 128, 32, 16]$. 
    %We tuned hyperparameters for each configuration. 
    }
    \label{fig:trade_off}
\end{figure}
\begin{figure}
    \centering
    \includegraphics[scale=0.7]{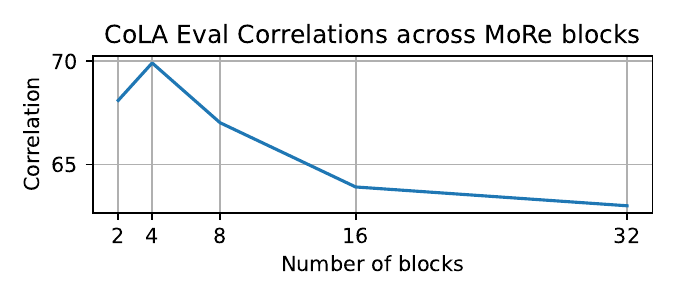}
    \caption{
    %Results on scaling $N$. Note that we use the hyperparameters for $N=4$ for all cases.
    Fixing $r_{blk} = 4$, increasing the number of blocks beyond $4$ does not lead to better performance. 
    }
    \label{fig:tune_nblocks}
\end{figure}

\noindent \textbf{MoRe Fine-Tuning.} During training, for a pretrained weight matrix \( W \in \mathbb{R}^{m \times n} \) and bias \(b \in \mathbb{R}^{m}\), we apply MoRe via:
\begin{equation}
\label{eq:training}
    \Phi_{MoRe}(x) = Wx + Mx + b,
\end{equation}
and update $L$ and $R$, where $L$ has shape ($N$, $r_{blk}$, $n/N$) and $R$ has shape ($N$, $n/N$, $r_{blk}$).
During inference, $W$ absorbs $M$ as in LoRA so there is zero additional overhead.

\begin{table*}[t!]
\centering
\resizebox{\linewidth}{!} {
\begin{tabular}{lcccccccccc}
    \toprule
    \textbf{Method} & \textbf{\#Params} & \textbf{BoolQ} & \textbf{PIQA} & \textbf{SIQA} & \textbf{HellaS.} & \textbf{WinoG.} & \textbf{ARC-e} & \textbf{ARC-c} & \textbf{OBQA} & \textbf{Avg.} \\
    \midrule
    LoRA\(_{r=32}\) & 53.3M (0.830\%) & 68.9 & 80.7 & 77.4 & 78.1 & 78.8 & 77.8 & 61.3 & 74.8 & 74.7 \\
    LoRA\(_{r=32}\), \textbf{Llama 13B}  & 83.2M (0.670\%) & 72.1 & 83.5 & 80.5 & 90.5 & 83.7 & 82.8 & 68.3 & 82.4 & 80.5 \\
    \textbf{MoRe\(_{r=32}\); q, k, v (ours)}  & \textbf{3M (0.047\%)} & 67 & 86.4 & \textbf{88.4} & \textbf{97.3} & \textbf{95.1} & \textbf{88.5}  & \textbf{76.6} & 79.9 & \textbf{84.9} \\
    ReFT &  \textbf{2.0M (0.031\%)} & 69.3 & 84.4 & 80.3 & 93.1 & 84.2 & 83.2 & 68.2 & 78.9 & 80.2 \\
    ReFT, \textbf{Llama 13B} &  3.1M (0.025\%) & 72.1 & \textbf{86.3} & 81.8 & 95.1 & 87.2 & 86.2 & 73.7 & \textbf{84.2} & 83.3 \\
    $\text{Adapter}^\text{S*}$ & 99.3M (0.800\%) & 71.8 & 83.0 & 79.2 & 88.1 & 82.4 & 82.5 & 67.3 & 81.8 & 79.5 \\
    $\text{Adapter}^\text{P*}$ & 358.7M (2.890\%) & 72.5 & 84.9 & 79.8 & 92.1 & 84.7 & 84.2 & 71.2 & 82.4 & 81.5 \\
    DoRA (half)* & 43.4M (0.350\%) & \textbf{72.5} & 85.3 & 79.9 & 90.1 & 82.9 & 82.7 & 69.7 & \textbf{83.6} & 80.8 \\
    DoRA  & 84.4M (0.680\%) & 72.4 & 84.9 & 81.5 & 92.4 & 84.2 & 84.2 & 69.6 & 82.8 & 81.5 \\
    \hline
    ChatGPT  & -- & 73.1 & 85.4 & 68.5 & 78.5 & 66.1 & 89.8 & 79.9 & 74.8 & 77.0 \\
    \bottomrule
\end{tabular}
}
\caption{Commonsense reasoning results. We take all numbers except for MoRe from \citet{liu2024dora}. Llama 1 7B is used unless otherwise specified.}
\label{tab:commonsense}
\end{table*}

\begin{table*}[t!]
\centering
\small
\resizebox{\linewidth}{!}{
\begin{tabular}{lcccccccccc}
    \toprule
    \textbf{PEFT} & \textbf{\#Params} & \textbf{AQuA} & \textbf{GSM8K} & \textbf{MAWPS} & \textbf{SVAMP} & \textbf{Avg.} \\
    \midrule
    LoRA\(_{r=32}\) & 53.3M (0.830\%) & 18.9 & \textbf{37.5} & 79.0 & \textbf{52.1} & 46.9 \\
    \textbf{MoRe\(_{r=32}\); q, k, v (ours)}  & \textbf{3M (0.047\%)} & 22.1 & 28.5 & 84.3 & 48.4 & 45.8 \\
    \textbf{MoRe\(_{r=32}\) (ours)}  & 10.68M (0.166\%) & \textbf{24.0} & 29.6 & \textbf{85.7} & 48.7 & \textbf{47.0} \\
    ReFT &  \textbf{1.99M (0.031\%)} & 21.4 & 26.0 & 76.2 & 46.8 & 42.6 \\
    PrefT* &  7.1M (0.110\%) & 14.2 & 24.4 & 63.4 & 38.1 & 35.0 \\
    $\text{Adapter}^\text{S*}$ & 63.6M (0.990\%) & 15.0 & 33.3 & 77.7 & 52.3 & 44.6 \\
    $\text{Adapter}^\text{P*}$ & 227.5M (3.540\%) & 18.1 & 35.3 & 82.4 & 49.6 & 46.4 \\
    \bottomrule
\end{tabular}
}
\caption{Math reasoning results on Llama 1 7b. We take all baseline results from \cite{hu2023llm}. }
\label{tab:math}
\end{table*}

Monarch matrices were originally proposed to accelerate pre-training, using two block-wise square monarch factors to substitute one dense matrix multiplication, with $O(n\sqrt{n})$ FLOPs. However, an interesting property of these matrices from rectangular factors is that even though each block is constrained to rank $r_{blk}$, the overall product can have a rank as large as $r$ = $Nr_{blk}$. We set $N$ to 4 for the best rank-sparsity balance. MoRe can achieve the same rank as LoRA with far fewer parameters, which empirically translates to added fine-tuning performance.

\subsection{Architectural Choices \& Analysis}
Inspired by work on search-based adaptation: AdaLoRA~\cite{zhang2023adaptive} which adaptively allocates parameters for different ranks, and GLoRA~\cite{chavan2023oneforall}, which tunes the adapter complexity layer-wise, we explored different adapter styles (see Appendix \ref{ablation}) as well as trading sparsity ($N$) and rank ($r_{blk}$) for the best investment of our parameter budget (Figure \ref{fig:trade_off}). Since merely changing $N$ does not change the parameter count, we constrained each block to be square for \textbf{block number scaling}. 

Interestingly, our search converged to a minimal 4-block architecture with \textbf{\textit{the fewest tunable hyperparameters among all methods}}, without the adapter scaler $\alpha$ in LoRA. Our search space trivially subsumes LoRA if we set $N$ to 1. Empirically, MoRe with $N=1$ and $r=r_{blk}=8$ obtains 68.18 Matthew's Correlation on CoLA, aligning with the 68.3 for rank 8 LoRA.

\noindent \textbf{Should we tune the number of blocks?} 
Due to our rectangular block-diagonal parametrization, increasing $N$ while fixing $r_{blk}$ increases the total rank $r$ under the same parameter budget. However, this induces worse performance, possibly because the matrix is sparser and it is harder to converge to a stable subspace. Empirically, performance drops drastically when $N>4$ (Figure~\ref{fig:tune_nblocks}).

\noindent \textbf{Relationship To BOFT.} 
BOFT \cite{liu2024parameterefficient} uses butterfly matrices~\cite{Dao2019LearningFA, dao2020kaleidoscope}, a related class of structured matrices that represent recursive division in FFT with learnable coefficients to efficiently perform feature(channel) mixing. 
Monarch~\cite{dao2022monarch} was proposed to replace butterfly matrices due to their hardware-unfriendly sparsity patterns, and is more generalized and expressive. \cite{dao2022monarch} proves that the product of Monarch matrices strictly subsume that of the butterfly matrix class.
While butterfly matrices have $O(n\log{n})$ FLOPs, BOFT is empirically 2x slower than LoRA and occupies much more memory, which we show in the following. 

\noindent \textbf{Theoretical Analysis.} One advantage of MoRe is that it is amenable to a theoretical analysis of its expressivity, mirroring that of LoRA \cite{zeng2024expressive}. We show in Appendix \ref{section:theory} that MoRe is more expressive than LoRA. 
\section{Experimental Results}

\begin{table*}[t!]
\centering
%\resizebox{\linewidth}{!}{%
\begin{tabular}{lcccccccccc}
    \toprule
    \textbf{PEFT} & \textbf{\#Params} & \textbf{MNLI} & \textbf{SST-2} & \textbf{MRPC} & \textbf{CoLA} & \textbf{QNLI} & \textbf{QQP} & \textbf{RTE} & \textbf{STS-B} & \textbf{Avg.} \\
    \midrule
    LoRA\(_{r=8}\)  & 0.79M & 90.2 & 96.0 & 89.8 & 65.5 & 94.7 & 90.7 & 86.3 & 91.7 & 88.16 \\
    \textbf{MoRe\(_{r=32}\) (ours)}  & 0.56M & \textbf{90.77} & \textbf{96.36} & \textbf{90.93} & \textbf{68.69} & 94.78  & 91.00  & 85.92 & 92.08 & \textbf{88.8} \\
    \textbf{MoRe\(_{r=4}\) (ours)}   & \textbf{0.14M} & 89.69  & 96.18 & 89.71 & 67.54 & \textbf{94.85} & 90.41 & 85.08 & 91.77 & 88.15 \\
    ReFT  & \textbf{0.048M} & 89.2 & 96.2 & 90.1 & 68.0 & 94.1 & 88.5 & \textbf{87.5} & 91.6 & 88.2 \\
    BOFT\textsuperscript{\smash{\(\substack{m=4 \\ b=4}\)}}  & 1.266M & 89.45 & 95.8 & 90.21 & 64.79 & 94.31 & 88.37 & 85.92 & \textbf{92.26} & 87.64  \\
    $\text{Adapter}^\text{*}$ & 0.89M & 90.1 & 95.2 & 90.5 & 65.4 & 94.6 & \textbf{91.4} & 85.3 & 91.5 & 88.0 \\
    $\text{Adapter}^\text{FFN}$ & 0.79M & 90.3 & 96.1 & 90.5 & 64.4 & 94.3 & 91.3 & 84.8 & 90.2 & 87.7 \\
    RED          & 0.048M & 89.5 & 96.0 & 90.3 & 68.1 & 93.5 & 88.8 & 86.2 & 91.3 & 88.0 \\
    \bottomrule
\end{tabular}
%}
\caption{
Language understanding comparisons. 
We take LoRA, Adapter, ReFT and RED results from \citet{wu2024reft}. All runs are averaged over 3 random seeds. 
We report Pearson Correlation for STS-B, Matthew's correlation for CoLA, matched accuracy for MNLI, and accuracy for other tasks. 
We report different parameter counts for BOFT from the original paper---while it is claimed that due to skew-symmetricity, only half of the matrix parameters are needed, then negated and copied to the other half, in practice the whole matrix requires gradients.
}
\label{tab:glue}
\end{table*}
We conducted experiments on three challenging NLP tasks covering over 20 datasets: commonsense reasoning, math reasoning, and language understanding of models ranging from Roberta-large to Llama 7B. 
We follow the widely adopted dataset settings in LLM-Adapters \cite{hu2023llm} and ReFT \cite{wu2024reft}. All experiments are performed on a single NVIDIA A100 40G, and use Flash Attention \cite{dao2022flashattention} when applicable. As we shall see, besides fixing $N$, MoRe needs \textbf{almost no tuning for rank $r_{blk}$}.

\noindent \textbf{Commonsense Reasoning.}
We train the Llama 1 7b model \cite{touvron2023llama} on the challenging Commonsense170k benchmark in \cite{hu2023llm} consisting of eight commonsense reasoning tasks. The model is prompted with multiple-choice problems to output the correct choice without step-wise reasoning. We report accuracy on the test set in table \ref{tab:commonsense}, and hyperparameter details can be found in \ref{hp}. \textbf{Note that MoRe with Llama 7B largely surpasses the state-of-the-art ReFT with Llama 13B with around $1/6$ of its training steps (3 epochs).}

\noindent \textbf{Math Reasoning.}
We train Llama 1 on the Math 10k dataset consisting of seven complex math reasoning tasks from \citet{hu2023llm}. Following \citet{wu2024reft}, we only used 4 datasets for final evaluation to avoid data leakage. The results are shown in Table~\ref{tab:math}.

\noindent \textbf{Natural Language Understanding.}
We evaluate MoRe on the GLUE benchmark \cite{glue} to show its superior parameter efficiency on small LLMs. We fine-tune RoBERTa-large 350M \cite{roberta} on eight datasets consisting of tasks such as sentiment classification and natural language inference, and report performance on the evaluation set following \citet{hu2021lora} over 3 different random seeds. Classifier heads are excluded from the parameter count. We use fp32 for all GLUE tasks, and hyperparameter tuning is done for each task separately (Appendix \ref{hp}). By default, we adapt query, key, and values. Notably, MoRe is on par with LoRA even with $r_{blk}=1$ and 0.14M parameters, and outperforms all other methods when $r_{blk}=4$.

\noindent \textbf{Memory Cost and Runtime.} %\label{section:memory cost} 
Modern GPUs rely on the tensor cores to accelerate matrix multiplication. MoRe leverages optimized CUDA batched matrix multiplication (BMM) kernels to populate the tensor core with many small matrices, with on par or better performance than GEMM. Here we show how our training speed compares with BOFT\footnote{BOFT's public implementation does not support bf16 and fp16, so we added these features.} and LoRA in Table \ref{tab:benchmark}, using the setting in our training experiments (see Appendix \ref{hp}). For Llama, we apply bf16, flash attention, and adapt all linear modules by default. Notably, \textbf{\textit{BOFT runs out of memory even on H100 80G}}, rendering it impractical for large models. 
%
%In Table \ref{section:memory cost}, 
MoRe slightly lags behind LoRA for the 350M RoBERTa due to the overhead of permutations allocating extra memory and multiple CUDA kernel launches, which we will address in a future Triton implementation, but excels in larger models due to its parameter efficiency.
\begin{table}[]
\centering
\resizebox{\linewidth}{!}{%
\begin{tabular}{lcccc}
    \toprule
    \textbf{Model} & \textbf{PEFT} & \textbf{Task} & \textbf{Peak Memory} & \textbf{Runtime} \\
    \midrule 
    RoBERTa-large & BOFT\textsuperscript{\smash{\(\substack{m=4 \\ b=4}\)}} & CoLA & 5.98 GB & 29.9 min \\
    RoBERTa-large & LoRA\(_{r=8}\) & CoLA & 4.3 GB & 14.7 min \\
    RoBERTa-large & MoRe\(_{r=32}\) & CoLA & 5.68 GB & 15.5 min \\
    Llama 7b & BOFT\textsuperscript{\smash{\(\substack{m=4 \\ b=4}\)}}; q, k, v & Math & 53.97 GB & 10 hr \\
    Llama 7b & BOFT\textsuperscript{\smash{\(\substack{m=4 \\ b=4}\)}} & Math & OOM & OOM \\
    Llama 7b & LoRA\(_{r=32}\) & Math & 20.9 GB & 4.83 hr \\
    Llama 7b & MoRe\(_{r=32}\) & Math & \textbf{20.6 GB} & \textbf{4.55 hr} \\
    \bottomrule
\end{tabular}}
\caption{Comparison of peak memory and runtime. We use a batch size of 2 for Llama 7B and 16 for RoBERTa. Due to the prohibitive memory cost of BOFT, we use H100 to benchmark Llama and only adapted query, key, and value for BOFT.}
\label{tab:benchmark}
\vspace{-2em}
\end{table}

\section{Conclusion}
We introduced MoRe, a framework for searching for high-quality adapter architectures via Monarch matrices.
MoRe offers excellent performance and has multiple promising directions for future work (described in Appendix \ref{section:limitation}). 
%
%For example, we are particularly interested in generalizing MoRe to work as a drop-in low rank projection module that can work directly with techniques such as ReFT.

%\input{sections/theory}

\bibliography{refs}
\bibliographystyle{icml2024}
\clearpage
\newpage

\appendix
\section*{Appendix}

\section{Theoretical results}\label{section:theory}
We show a theoretical finding for the expressiveness of MoRe in the spirit of \citet{zeng2024expressive}.
    
We start with a simple result.
\begin{lemma}\label{lma:submatrix_norm}
        Let $W$ be an $n\times n$ matrix, where $n=m^2$ for some integer $m$. Let $W_{jk}$ denote the submatrix of $W$ such that 
        \begin{equation*}
            W = \begin{bmatrix}
                W_{11} & W_{12} & \hdots & W_{1m} \\
                W_{21} & W_{22} & \hdots & W_{2m} \\
                \vdots & \vdots &         & \vdots \\
                W_{m1} & W_{m2} & \hdots & W_{mm} \\
            \end{bmatrix}
        \end{equation*}
        Let $x\in \mathbb{R}^n$, with a similar decomposition into $x_k$ for $k=1,2,\hdots,m$. Then $\|Wx\|_2 \leq \sum_{jk}\|W_{jk}x_k\|_2$.
    \end{lemma}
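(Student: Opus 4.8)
The plan is to reduce the claim to a single application of the triangle inequality in $\mathbb{R}^n$, after re-expressing $Wx$ as a sum of vectors each supported on one coordinate block. First I would write out the block structure of the matrix–vector product: identifying $\mathbb{R}^n \cong \mathbb{R}^m \oplus \cdots \oplus \mathbb{R}^m$ with $m$ coordinate blocks, the $j$-th block of $Wx$ is $(Wx)_j = \sum_{k=1}^m W_{jk} x_k$. The key idea is then to decompose $Wx$ into contributions indexed by the pair $(j,k)$: for each $j,k$ let $u_{jk} \in \mathbb{R}^n$ be the vector equal to $W_{jk}x_k$ on the $j$-th coordinate block and zero on every other block. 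The blockwise formula above then gives $Wx = \sum_{j,k} u_{jk}$.

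With this decomposition in hand, the second step is a direct estimate using subadditivity of the Euclidean norm,
\begin{equation*}
\|Wx\|_2 = \Bigl\| \sum_{j,k} u_{jk} \Bigr\|_2 \le \sum_{j,k} \|u_{jk}\|_2 .
\end{equation*}
Finally I would observe that zero-padding does not change the Euclidean norm, so $\|u_{jk}\|_2 = \|W_{jk}x_k\|_2$, which yields exactly $\|Wx\|_2 \le \sum_{j,k}\|W_{jk}x_k\|_2$ and closes the argument.

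An equivalent route avoids the embedding vectors: since the Euclidean norm is additive across the coordinate blocks, $\|Wx\|_2^2 = \sum_j \|(Wx)_j\|_2^2$; applying the triangle inequality inside each block gives $\|(Wx)_j\|_2 \le \sum_k \|W_{jk}x_k\|_2$, and combining this with the elementary bound $\|v\|_2 \le \|v\|_1$ for the nonnegative vector $v_j = \sum_k \|W_{jk}x_k\|_2$ produces the same conclusion.

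In truth this lemma carries no real obstacle—it is essentially bookkeeping with the block indices. The only point requiring minor care is to confirm that the stated partition of $x$ into the $x_k$ and of $W$ into the $W_{jk}$ is compatible with the ordinary block matrix–vector product, i.e. that $(Wx)_j = \sum_k W_{jk} x_k$ holds under the given index conventions; once that identity is pinned down, both the decomposition $Wx = \sum_{j,k} u_{jk}$ and the final inequality follow immediately.
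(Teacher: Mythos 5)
Your proposal is correct and is essentially the paper's argument: the paper writes out the block form of $Wx$, bounds the norm of the stacked vector by the sum of the block norms, and then applies the triangle inequality within each block, which is exactly your second route and is equivalent to your zero-padding decomposition followed by a single triangle inequality. No gaps.
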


    \begin{proof}
    We have that 
        \begin{align*}
            \|Wx\|_2 &= \|\begin{bmatrix}
                W_{11} & \hdots & W_{1m} \\
                \vdots & \ddots & \vdots \\
                W_{m1} & \hdots & W_{mm} \\
            \end{bmatrix} \begin{bmatrix}
                x_1 \\
                \vdots \\
                x_n
            \end{bmatrix}\|_2 \\
            &= \|\begin{bmatrix}
                W_{11}x_{1} + \hdots +  W_{1m}x_{m} \\
                \vdots \\
                W_{m1}x_{1} + \hdots +  W_{mm}x_{m} \\
            \end{bmatrix}\|_2 \\
            &\leq \sum_j \|W_{j1}x_{1} + \hdots +  W_{jm}x_{m}\|_2 \\
            &\leq \sum_{jk} \|W_{jk}x_{k}\|_2 \\
        \end{align*}
    \end{proof}

    \begin{corollary}
        Let $W$ be an $n\times n$ matrix, where $n=m^2$ for some integer $m$. Let $W_{jk}$ denote the submatrices of $W$. Then $\sigma_1(W) \leq \sum_{jk} \sigma_1(W_{jk})$.
    \end{corollary}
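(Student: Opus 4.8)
The plan is to use the variational characterization of the largest singular value, namely $\sigma_1(W) = \|W\|_2 = \sup_{\|x\|_2 = 1} \|Wx\|_2$, and reduce the whole statement to the preceding lemma by controlling each block term uniformly over unit vectors. First I would fix an arbitrary $x \in \mathbb{R}^n$ with $\|x\|_2 = 1$ and decompose it into the subvectors $x_k$ exactly as in Lemma~\ref{lma:submatrix_norm}. Applying that lemma immediately yields $\|Wx\|_2 \leq \sum_{jk} \|W_{jk} x_k\|_2$, so the only remaining work is to bound each individual term by $\sigma_1(W_{jk})$ in a way that does not depend on the particular $x$.

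The key step is to invoke the defining property of the top singular value of each block: for every $j,k$ we have $\|W_{jk} x_k\|_2 \leq \sigma_1(W_{jk}) \|x_k\|_2$. I would then observe that because the $x_k$ are disjoint coordinate segments of $x$, they satisfy $\sum_k \|x_k\|_2^2 = \|x\|_2^2 = 1$, and in particular $\|x_k\|_2 \leq 1$ for each $k$. Substituting gives $\|W_{jk} x_k\|_2 \leq \sigma_1(W_{jk})$, and summing produces the bound $\|Wx\|_2 \leq \sum_{jk} \sigma_1(W_{jk})$, with the right-hand side now independent of $x$.

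Finally I would take the supremum over all unit vectors $x$ on the left-hand side. Since the right-hand side is a fixed constant, this yields $\sigma_1(W) = \sup_{\|x\|_2 = 1} \|Wx\|_2 \leq \sum_{jk} \sigma_1(W_{jk})$, which is the claim.

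I do not expect any serious obstacle here, as the result is essentially a direct corollary once Lemma~\ref{lma:submatrix_norm} is in hand. The only point requiring mild care is the inequality $\|x_k\|_2 \leq \|x\|_2$: one must note that it holds precisely because each $x_k$ is a block of coordinates of the single global unit vector $x$, so that the block norms cannot individually exceed the total norm. Everything else is a routine application of the operator-norm bound for each submatrix followed by passing to the supremum.
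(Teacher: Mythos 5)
Your proposal is correct and follows essentially the same route as the paper's proof: apply Lemma~\ref{lma:submatrix_norm}, bound each term via $\|W_{jk}x_k\|_2 \leq \|W_{jk}\|_2\|x_k\|_2 \leq \sigma_1(W_{jk})$ using $\|x_k\|_2 \leq \|x\|_2 = 1$. The only cosmetic difference is that you take a supremum over all unit vectors while the paper fixes a norm-attaining $x$ at the outset; the substance is identical.
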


    \begin{proof}
        Note that $\sigma_1(M) = \|M\|_2$ for any matrix $M$. Let $x\in \mathbb{R}^n$ such that $\|x\|_2=1$ and $\|Wx\|_2=\|W\|_2$. Using Lemma \ref{lma:submatrix_norm}, 
        \begin{equation*}
            \|W\|_2 = \|Wx\|_2 \leq \sum_{jk} \|W_{jk}x_{k}\|_2
        \end{equation*}
        Since $\|W_{jk}x_{k}\|_2 \leq \|W_{jk}\|_2 \cdot  \|x_{k}\|_2 \leq \|W_{jk}\|_2$, we have 
        \begin{equation*}
            \|W\|_2 \leq \sum_{jk} \|W_{jk}x_{k}\|_2 \leq \sum_{jk} \|W_{jk}\|_2
        \end{equation*}
    \end{proof}

    \begin{theorem}\label{thm:est_err_last}
        Suppose both the target and frozen model are linear and respectively parameterized by $\bar{W}$ and $W=\prod_{l=1}^L W_l$ with both $\bar{W}$ and $W$ full rank. Assume that $r=N$ for these Monarch matrices, i.e. the Monarch factors are square with square blocks on the diagonal. The adapted model is allowed to fine-tune the last layer's parameters with a Monarch matrix: $\hat{W}=\prod_{l=1}^{L-1}W_l (W_L+\Delta_{W_L})$, where $\Delta_{W_L} \in \mathcal{M}$. Define error between the target and the frozen model as $E = \bar{W}-W$, and regularized error as $\Tilde{E} = (\prod_{l=1}^{L-1}W_l)^{-1}E$.
        The estimation error between the adapted model and the target model is bounded:
        \begin{align*}
            \|\bar{W}-\hat{W}\|^2_F &\leq \|\prod_{l=1}^{L-1}W_l\|_F^2 \cdot \|\Tilde{E}-\Delta_{W_L}\|_F^2 \\
            &= \|\prod_{l=1}^{L-1}W_l\|_F^2 \cdot (\sum_{jk}(\sum_{i=2}{\sigma_i^2(\Tilde{E}_{:,j,k,:})})).
        \end{align*}
        where $\sigma_i$ is the $i$-th eigenvalue of the given function and $\Tilde{E}_{ijkl}$ is $\Tilde{E}$ reshaped into a 4-D tensor.
    \end{theorem}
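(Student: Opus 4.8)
The plan is to peel off the fixed prefix $A := \prod_{l=1}^{L-1}W_l$ and reduce the whole statement to a pure Monarch-approximation problem on the regularized error $\tilde E$. First I would rewrite the residual in closed form: since $W = A W_L$, $\hat W = A(W_L+\Delta_{W_L})$, and $E=\bar W - W$, a one-line cancellation gives $\bar W - \hat W = E - A\Delta_{W_L}$. Because all $W_l$—hence $A$—are full rank, $A$ is invertible, so the definition $\tilde E = A^{-1}E$ (i.e. $E = A\tilde E$) lets me factor the residual as $\bar W - \hat W = A(\tilde E - \Delta_{W_L})$.

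Next I would obtain the first (inequality) line by submultiplicativity of the Frobenius norm, $\|A(\tilde E-\Delta_{W_L})\|_F^2 \le \|A\|_F^2\,\|\tilde E-\Delta_{W_L}\|_F^2$, which is a Cauchy--Schwarz argument on matrix entries. This isolates $\|A\|_F^2$ and leaves the term $\|\tilde E-\Delta_{W_L}\|_F^2$. Since $A^{-1}$ imposes no constraint on $\Delta_{W_L}$, I am free to choose $\Delta_{W_L}\in\mathcal M$ to be the best Monarch approximant of $\tilde E$; this is what converts the remaining inequality into the stated equality, so the theorem reduces to evaluating $\min_{\Delta\in\mathcal M}\|\tilde E-\Delta\|_F^2$.

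The heart of the argument is this minimization. I would reshape $\tilde E$ into the $4$-D tensor $\tilde E_{ijkl}$ dictated by the Monarch index convention and show that the square structure $M=P_1 L P_2 R$, with $L,R$ block-diagonal of $m$ square $m\times m$ blocks ($r=N=m$), forces every $m\times m$ slice obtained by fixing the two ``block'' coordinates to be an outer product, i.e. rank at most one. Concretely, writing out the batched-matmul form of $M=P_1 L P_2 R$ yields an entrywise formula of the type $M_{(p,q),(b,a)}=(L_q)_{pa}(R_a)_{qb}$, so the slice indexed by a fixed pair $(j,k)$ factors as $u\,v^\top$, with $u$ a column of some $L$-block and $v$ a row of some $R$-block. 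I would then verify the crucial decoupling: as $(j,k)$ ranges over all $m^2$ slices, the vectors $u,v$ populate distinct columns and rows of the $L$- and $R$-blocks, so the per-slice rank-one factors can be chosen independently, with no consistency conflict and with the per-slice scaling free.

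Given this decoupling, the objective splits as $\|\tilde E-\Delta\|_F^2 = \sum_{jk}\|\tilde E_{:,j,k,:}-G^{(j,k)}\|_F^2$ over independent rank-one matrices $G^{(j,k)}$, and the Eckart--Young--Mirsky theorem makes each summand minimal at $\sum_{i\ge 2}\sigma_i^2(\tilde E_{:,j,k,:})$, the energy in all but the top singular value. Summing over slices gives $\min_{\Delta\in\mathcal M}\|\tilde E-\Delta\|_F^2 = \sum_{jk}\sum_{i\ge 2}\sigma_i^2(\tilde E_{:,j,k,:})$, which combines with the Frobenius bound to yield the theorem. The main obstacle I anticipate is precisely the slice-rank-one characterization and its decoupling: it requires pinning down the exact permutation convention in $P_1 L P_2 R$ so that the reshaped slices align with $\tilde E_{:,j,k,:}$, and arguing that the independently chosen SVD solutions reassemble into genuine block-diagonal factors $L$ and $R$ rather than merely an unstructured collection of rank-one blocks.
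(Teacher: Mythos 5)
Your proposal is correct and follows essentially the same route as the paper, whose own ``proof'' is only a one-line deferral to the blockwise decomposition of \citet{dao2022monarch}: peel off the prefix $\prod_{l=1}^{L-1}W_l$ by submultiplicativity of the Frobenius norm, then reduce the equality to the optimal Monarch projection of $\tilde{E}$, which is exactly the per-slice rank-one (Eckart--Young) approximation of the reshaped 4-D tensor. The decoupling step you flag as the main obstacle --- that each slice $\tilde{E}_{:,j,k,:}$ corresponds to a distinct column of an $L$-block paired with a distinct row of an $R$-block, so the rank-one factors reassemble into genuine block-diagonal $L$ and $R$ --- is precisely the content of the projection theorem in the Monarch paper that the authors cite, so your reconstruction fills in the details they omit without deviating from their argument.
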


    \begin{proof}
        The proof directly follows the decomposition in the Monarch paper \cite{dao2022monarch} and the previously derived results. 
    \end{proof}
        
    We now use a worst-case to illustrate how the Monarch approximation differs from a rank-1 approximation. Let $A$ be any matrix of size $n\times n$. Reshape $A$ into a 4D tensor $\Tilde{A}$ of dimension $m\times m \times m \times m$, where $m=\sqrt{n}$. Then in the worst case, each sub-matrix is of full-rank $m$ and the singular values are all equal. An optimal monarch matrix $M$ in Frobeneus norm performs a rank-1 approximation for each sub-matrix. The estimation error $\|A-M\|^2_F$ can be interpreted as all unexplained singular values, whose proportion is $\frac{m-1}{m}$. Hence $\|A-M\|^2_F = \frac{m-1}{m}\|A\|_F^2$. This provides a bound when in a general case.

    Now consider a rank-1 approximation of $A$. In the worst case, since $A$'s rank cannot be smaller than $m$ (a full matrix's rank is always equal or greater than the rank of its sub-matrix), let $A$ be of rank $m$.  Suppose $A$'s non-zero singular values are still all equal (?). The estimation error for a rank-1 approximation $D$ of $A$ will be $\|A-D\|_F^2=\frac{m-1}{m}\|A\|_F^2$, which equals the Monarch approximation. However, in other cases where $A$'s rank is greater than $m$, a Monarch approximation is strictly better than a rank-one approximation.

    \subsection{Optimizations for Rectangular Monarch matrices}
    There are two distinct cases with variable-rank Monarch matrices. Each case depends on how the block rank compares to the block number.
    
    Let $n$ be the dimensions of $M$, the Monarch product, let $N$ be the number of blocks in each factor $L$ and $R$, let $m=n/N$ be the block width, and let $r$ be the block rank. In total, we have that $M$, $L$, and $R$ are of dimension $(n,n)$, $(n,r)$, and $(r,n)$ respectively. When compressed into 4- and 3-tensors, these have dimension $(m,N,N,m)$, $(N,r,m)$, and $(N,m,r)$ respectively. To investigate the behavior of $M=P_1LP_2R$, consider a vector $x\in \mathbb{R}^n$ and how $M$ transforms this vector. Reshape $x$ into a 2-tensor with dimensions $(N,m)$. 

    First, assume $N \geq r$ where $r$ divides $N$ and let $b=N/r$. For this case, further reshape $M$, $L$, $R$, and $x$ into shapes $(m,r,b,r,b,m)$, $(r,b,r,m)$, $(r,b,m,r)$, and $(r,b,m)$, which is possible since $rb=N$. 

    \begin{itemize}
        \item Apply $R$: This results in the intermediate $y_{kbj}=\sum_i R_{kbji} x_{kbi}$.
        \item Apply $P_2$: This transposes the first and third coordinates of $y$, so $y_{kbj} \xrightarrow{} y_{jbk}$.
        \item Apply $L$: This results in the intermediate $z_{jbl}=\sum_k L_{jblk} y_{jbk}$.
        \item Apply $P_1$: This again transposes the first and third coordinates of $z$, so $z_{jbl} \xrightarrow{} z_{lbj}$.
    \end{itemize}

    In total, this amounts to computing $z_{lbj}=\sum_{k,i} L_{jblk} R_{kbji} x_{kbi}$. We then can define $M_{ljbkbi} = L_{jblk} R_{kbji}$ which defines the operation $z_{lbj}=\sum_{k,i} M_{ljbkbi} x_{kbi}$. Notice that the optimal solution can be found through a collection of rank-1 decompositions of $M_{:jbkb:}$, each of size $(m,m)$. This common index $b$ implies that whenever those coordinates in the 6-tensor disagree, this Monarch product contains zeros, so this decomposition will be sparse.
\begin{figure}
    \centering
\includegraphics[width=0.8\linewidth]{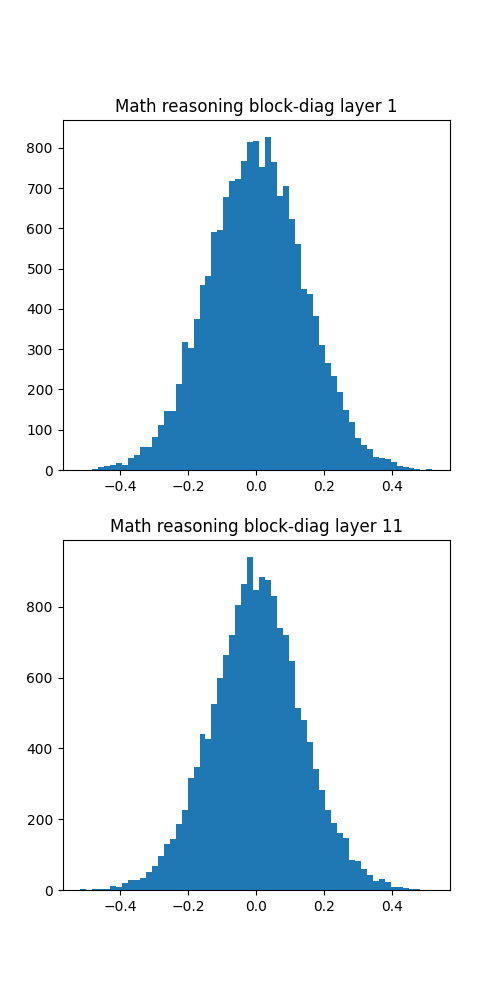}
    \caption{Llama 7b trained on Math Reasoning tasks}
    \label{fig:math_weights}
\end{figure}
    Next, assume that $N < r$ where $N$ divides $r$ and let $b=r/N$. For this case, further reshape $L$ and $R$ into shapes $(N,N,b,m)$ and $(N,m,N,b)$, which is possible since $Nb=r$. 

    \begin{itemize}
        \item Apply $R$: This results in the intermediate $y_{kjb}=\sum_i R_{kjbi} x_{ki}$.
        \item Apply $P_2$: This transposes the first and second coordinates of $y$, so $y_{kjb} \xrightarrow{} y_{jkb}$.
        \item Apply $L$: This results in the intermediate $z_{lj}=\sum_{k,b} L_{jlkb} y_{jkb}$.
        \item Apply $P_1$: This again transposes the first and second coordinates of $z$, so $z_{lj} \xrightarrow{} z_{jl}$.
    \end{itemize}

    In total, this amounts to computing $z_{jl}=\sum_{k,i,b} L_{jlkb} R_{kjbi} x_{ki}$. We then can define $M_{ljki} = \sum_b L_{jlkb} R_{kjbi}$ which defines the operation $z_{lbj}=\sum_{k,i} M_{ljki} x_{ki}$. Notice that the optimal solution can be found through a collection of rank-$b$ decompositions of $M_{:jk:}$, each of size $(m,m)$. 

    Using these decompositions, we obtain some straightforward extensions of Theorem \ref{thm:est_err_last}.

    \begin{theorem}
        Suppose both the target and frozen model are linear and respectively parameterized by $\bar{W}$ and $W=\prod_{l=1}^L W_l$ with both $\bar{W}$ and $W$ full rank. Assume that $N < r$ with $r$ a multiple of $N$. The adapted model is allowed to fine-tune the last layer's parameters with a Monarch matrix: $\hat{W}=\prod_{l=1}^{L-1}W_l (W_L+\Delta_{W_L})$, where $\Delta_{W_L} \in \mathcal{M}$. Define error between the target and the frozen model as $E = \bar{W}-W$, and regularized error as $\Tilde{E} = (\prod_{l=1}^{L-1}W_l)^{-1}E$.
        The estimation error between the adapted model and the target model is bounded:
        \begin{align*}
            \|\bar{W}-\hat{W}\|^2_F &\leq \|\prod_{l=1}^{L-1}W_l\|_F^2 \cdot \|\Tilde{E}-\Delta_{W_L}\|_F^2 \\
            &= \|\prod_{l=1}^{L-1}W_l\|_F^2 \cdot (\sum_{jk}(\sum_{i=r/N+1}{\sigma_i^2(\Tilde{E}_{:,j,k,:})})).
        \end{align*}
        where $\sigma_i$ is the $i$-th eigenvalue of the given function and $\Tilde{E}_{ijkl}$ is $\Tilde{E}$ reshaped into a 4-D tensor.
    \end{theorem}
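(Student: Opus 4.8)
The plan is to follow the same two-stage argument used for Theorem~\ref{thm:est_err_last}, changing only the block-rank that enters the final Eckart--Young step. First I would reduce the estimation error to a regularized approximation problem. Since $W=\prod_{l=1}^{L}W_l=\prod_{l=1}^{L-1}W_l\,W_L$, substituting $\hat{W}=\prod_{l=1}^{L-1}W_l\,(W_L+\Delta_{W_L})$ gives $\hat{W}=W+\prod_{l=1}^{L-1}W_l\,\Delta_{W_L}$, hence
\begin{equation*}
\bar{W}-\hat{W}=E-\prod_{l=1}^{L-1}W_l\,\Delta_{W_L}=\Big(\prod_{l=1}^{L-1}W_l\Big)\big(\Tilde{E}-\Delta_{W_L}\big),
\end{equation*}
where the last equality uses invertibility of $\prod_{l=1}^{L-1}W_l$ (guaranteed since $W$ is full rank) together with the definition $\Tilde{E}=(\prod_{l=1}^{L-1}W_l)^{-1}E$. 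Submultiplicativity of the Frobenius norm then yields the first inequality $\|\bar{W}-\hat{W}\|_F^2\le\|\prod_{l=1}^{L-1}W_l\|_F^2\cdot\|\Tilde{E}-\Delta_{W_L}\|_F^2$, exactly as in Theorem~\ref{thm:est_err_last}.

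The second stage is to evaluate $\min_{\Delta_{W_L}\in\mathcal{M}}\|\Tilde{E}-\Delta_{W_L}\|_F^2$, and this is where the $N<r$ hypothesis enters. I would invoke the reshaping from the subsection on rectangular Monarch matrices: with $b=r/N$, the Monarch product $M=P_1LP_2R$ acts as $z_{jl}=\sum_{k,i}M_{ljki}x_{ki}$ with $M_{ljki}=\sum_{b}L_{jlkb}R_{kjbi}$, so each $m\times m$ block $M_{:,j,k,:}$ ranges over all matrices of rank at most $b=r/N$ as $L$ and $R$ vary, and distinct $(j,k)$ blocks are parametrized by disjoint slices of $L$ and $R$. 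Consequently the objective separates across blocks,
\begin{equation*}
\min_{\Delta_{W_L}\in\mathcal{M}}\|\Tilde{E}-\Delta_{W_L}\|_F^2=\sum_{jk}\ \min_{\operatorname{rank}(B)\le r/N}\|\Tilde{E}_{:,j,k,:}-B\|_F^2.
\end{equation*}

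Applying the Eckart--Young--Mirsky theorem to each block problem, the optimal rank-$(r/N)$ approximation of $\Tilde{E}_{:,j,k,:}$ leaves a residual equal to the tail of its squared singular values, $\sum_{i=r/N+1}\sigma_i^2(\Tilde{E}_{:,j,k,:})$; summing over $j,k$ and combining with the first-stage inequality delivers the claimed bound, with the stated equality attained by choosing $\Delta_{W_L}$ to realize each block-wise truncated SVD. I expect the only real obstacle to be the separability claim: one must verify that, after the permutations $P_1,P_2$ are unwound, the block-diagonal Monarch parametrization genuinely lets each $M_{:,j,k,:}$ be an \emph{independent} arbitrary rank-$b$ matrix with no shared constraints across $(j,k)$ pairs. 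This is precisely what the disjoint-slice structure of $L$ and $R$ in the $N<r$ reshaping guarantees, and it is the step that replaces the rank-$1$ blocks of the $r=N$ case with rank-$b$ blocks, thereby shifting the summation index from $2$ to $r/N+1$.
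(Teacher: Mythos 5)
Your proposal is correct and follows essentially the same route the paper intends: the paper gives no explicit proof body for this theorem, deferring to the rectangular-Monarch decomposition derived just above it (where each block $M_{:,j,k,:}$ is a sum of $b=r/N$ rank-one terms parametrized by disjoint slices of $L$ and $R$) together with the submultiplicativity argument and block-wise Eckart--Young step already used for Theorem~\ref{thm:est_err_last}. Your write-up simply makes that sketch explicit, including the one point genuinely worth verifying---that the blocks are independently and arbitrarily rank-$b$---which you justify correctly via the disjoint-slice structure.
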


    Notice the difference in the rightmost sum. The sum over $i$ starts at $r/N+1$ instead of $2$. 

Next, we provide experimental details.
\section{Hyperparameter Tuning}\label{hp}
We release our hyperparameter search procedure and code in the hope that it can help the community simplify this time-consuming process.
We use the asynchronous successive halving algorithm (ASHA) \cite{li2020system} to efficiently search and early-stop on our 8 * A100 cluster. 

\subsection{GLUE Language Understanding}
For BOFT, we took the hyperparameters for DeBERTA-v3 base on GLUE from their paper and tuned the learning rate only. For MoRe, we started from the hyperparameters in \cite{hu2021lora}  and randomly sampled the learning rate and batch size. We present the hyperparameters in table \ref{tab:glue_hp}.

\begin{table}[h!]
    \resizebox{\linewidth}{!}{
        \begin{tabular}{lcccccccccc}
            \toprule
            \textbf{Hyperparameter} & \textbf{MNLI} & \textbf{SST-2} & \textbf{MRPC} & \textbf{CoLA} & \textbf{QNLI} & \textbf{QQP} & \textbf{RTE} & \textbf{STS-B} \\
            \midrule
            learning rate & 2e-4 & 4.4e-4 & 3.2e-4 & 2.1e-4 & 3e-4 & 6.2e-4 & 5.4e-4 & 6.4e-4 \\
            batch size & 32 & 32 & 16 & 32 & 32 & 16 & 32 & 32 \\
            weight decay & \multicolumn{8}{c}{1e-3} \\
            lr scheduler & \multicolumn{8}{c}{cosine} \\
            Epochs & 10 & 10 & 20 & 20 & 20 & 10 & 20 & 20  \\
            \bottomrule
        \end{tabular}
    }
    \caption{GLUE hyperparameters}
    \label{tab:glue_hp}
\end{table}

\subsection{Math reasoning and Commonsense reasoning}
For these challenging reasoning tasks, we found performance to be less sensitive to hyperparameters. We took 1000 examples and 10,000 examples from Math10K and Commonsense170K as the tuning evaluation set, respectively. We present the hyperparameters in table \ref{tab:reasoning hp}.
\begin{table}[h!]
\resizebox{\linewidth}{!}{
\begin{tabular}{lcccccccccc}
    \toprule
    \textbf{Hyperparameter} & \textbf{Math reasoning} & \textbf{Commonsense reasoning} \\
    \midrule
    learning rate & 3e-4 & 4e-4 \\
    batch size(w/ gradient accumulation) & 64 & 16 \\
    weight decay & \multicolumn{2}{c}{0} \\
    lr scheduler & \multicolumn{2}{c}{cosine} \\
    dropout & \multicolumn{2}{c}{0.1} \\
    Epochs & 12 & 3 \\
    \bottomrule
\end{tabular}}
    \caption{Reasoning hyperparameters}
    \label{tab:reasoning hp}
\end{table}

\section{Architecture Ablations}\label{ablation}
With 3 potential architectural hyperparameters ($r_{blk}$, $N$ and whether to use square blocks) in our setup, one might ask whether we should use NAS to find the most efficient architecture.

We tested using monarch as a multiplicative factor instead of an additive factor as in BOFT, adding a scaler $\alpha$ on the adapter outputs as in LoRA and adding a scaler parameter; all underperform our default 4-block configuration. We also tried including $r_{blk}$ and $N$ in our hyperparameter search to mimic NAS, but \textbf{all runs converged to the configuration with the largest parameter count}, with marginal performance gains. Therefore we didn't pursue expensive NAS algorithms.
\begin{table}[h!]
\centering
\label{benchmark}
\begin{tabular}{lccc}
    \toprule
    \textbf{Method} & \textbf{GLUE CoLA} \\
    \midrule 
    MoRe (learnable scaler) & 41.1 \\
    MoRe ($\alpha=2$) & 0 \\
    MoRe (multiplicative factor) &0  \\
    \bottomrule
\end{tabular}
\label{tab:ablation}
\end{table}

\section{Learned Weight Distributions}\label{weight distribution}
We demonstrate in figure \ref{fig:math_weights} and \ref{fig:roberta_weights} that the trained block-diagonal matrices approximate Gaussian distribution well as the amount of training increases, in an attempt to interpret the results.

\begin{figure}
    \centering
    \includegraphics[width=0.8\linewidth]{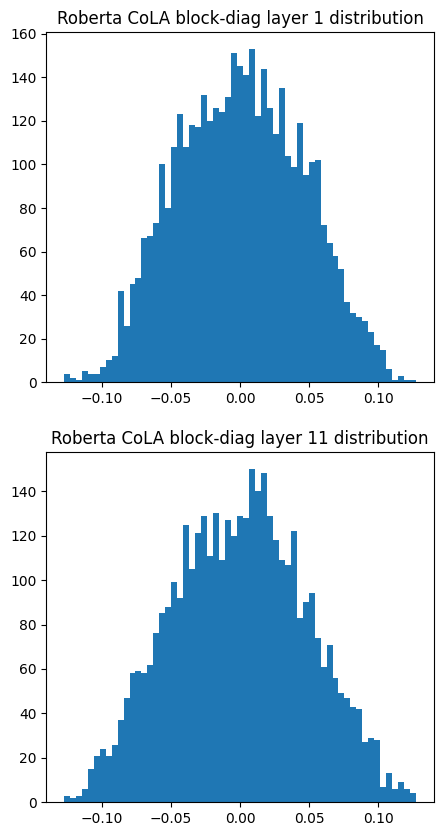}
    \caption{RoBERTa-large trained on CoLA}
    \label{fig:roberta_weights}
\end{figure}

\section{Failure Cases}\label{failure}
Inspired by \citet{meng2024pissa} that fine-tuning is strengthening some task-specific subspace components, we attempted using the dense to sparse projection algorithm (block-wise SVD) from \cite{dao2022monarch} to initialize MoRe from principal components. However, the method fails to converge on reasoning tasks and obtains only a 57.9 correlation on CoLA.

We've also tested naively replacing the low-rank projections in ReFT with a single  Monarch factor $P$ plus permutation $P_1$, which only achieved a 19.5 correlation on CoLA.

% \section{Butterfly Matrix Illustration}
% We show examples of 
% \begin{figure}
%     \centering
%     \includegraphics[width=1.6\linewidth]{img/butterfly.jpg}
%     \caption{A butterfly matrix with input dimension $n=16$ and number of blocks $m=4$}
%     \label{fig:butterfly}
% \end{figure}

\section{Limitations and Future Work}\label{section:limitation}
Currently, MoRe poses a few limitations that we are working to address. 
\begin{enumerate}
    \item MoRe is implemented with two BMMs and two permutations, which introduces overhead due to 4 CUDA kernel launches. With machine learning compilers such as Triton\cite{tillet2019triton}, it's easy to fuse them into one kernel and recompute the activations during backward, with memory savings and speed-up. We're testing the Triton implementation's precision.
    \item We seek to substitute low-rank projections. A natural extension from our low-rank adaptation use case is to establish MoRe as a general drop-in low-rank projection module. However as shown in the Appendix~ \ref{failure}, it does not work directly with ReFT. 
    \item Projection subspace interpretation: we show (Appendix \ref{weight distribution}) that Monarch weights approach Gaussian distribution. However, we've not explored the subspace similarity between the dense and MoRe projections such as which dense components are strengthened by MoRe, due to complicated block-diagonality. Such an understanding may enable us to initialize MoRe from dense matrices' principal components as in \citet{meng2024pissa} with improved convergence and performance, and explain why scaling rank doesn't always deliver performance.
    
\end{enumerate}

\section{Pseudocode}
As the permutations $P_{1}$ and $P_{2}$ may be less intuitive, we provide a minimal PyTorch pseudocode to demonstrate their usage below.
% \begin{figure*}[t!]
\clearpage
\begin{lstlisting}[language=Python, label={algo:monarch}]
# Input:
#   x: (bs, n)
#   blkdiag1: (nblocks, block rank, block size)
#   blkdiag2: (nblocks, block size, block rank)
batch_shape, n = x.shape[:-1], x.shape[-1]
nblocks, blk_r, blk_sz = blkdiag1.shape
batch_dim = torch.prod(batch_shape)
x = x.reshape(bs, nblocks, blk_sz)
out1 = torch.empty(batch_dim, nblocks, blk_r).transpose(0, 1)
# (nblocks, batch_dim, blk_sz) @ (nblocks, blk_sz, blk_r) -> (nblocks, batch_dim, blk_r)
out1 = torch.bmm(x, blkdiag1.transpose(-1, -2), out=out1) 
out1 = out1.transpose(0, 1).reshape(batch_dim, blk_r, nblocks)
out1 = out1.transpose(-1, -2).contiguous().transpose(0, 1)
out2 = torch.empty(nblocks, batch_dim, blk_sz)
# (nblocks, batch_dim, blk_r) @ (nblocks, blk_r, blk_sz) -> (nblocks, batch_dim, blk_sz)
out2 = torch.bmm(out1, blkdiag2.transpose(-1, -2), out=out2) 
out2 = out2.permute(1, 2, 0).reshape(*batch_shape, blk_sz * nblocks)
\end{lstlisting}\label{algo:monarch}
% \end{figure*}

\end{document}